\documentclass{article} 

\usepackage{times}
\usepackage{graphicx} 
\usepackage{subfigure} 
\usepackage{natbib}
\usepackage{algorithm}
\usepackage{algorithmic}
\usepackage{hyperref}
\usepackage[accepted]{icml2015}
\usepackage{enumitem} 

\usepackage[framemethod=TikZ]{mdframed}

\newcommand{\SSp}{\mathcal{S}}
\newcommand{\ASp}{\mathcal{A}}
\newcommand{\CSp}{\mathcal{C}}

\usepackage{url}
\usepackage{amsthm}
\usepackage{amsmath}
\usepackage{amssymb}
\usepackage{amsfonts}
\usepackage{color}
\newtheorem{Definition}{Definition}
\newtheorem{Lemma}{Lemma}
\newtheorem{Theorem}{Theorem}
\newtheorem{Corollary}{Corollary}

\newtheorem{Assumption}{Assumption}

\begin{document}

\icmltitlerunning{Contextual Markov Decision Processes}

\twocolumn[
\icmltitle{Contextual Markov Decision Processes}

\icmlauthor{Assaf Hallak}{ifogph@gmail.com}
\icmladdress{The Technion,
            Haifa, Israel}
            
\icmlauthor{Dotan Di Castro}{dotan.dicastro@gmail.com }
\icmladdress{Yahoo Labs,
            Haifa, Israel}

\icmlauthor{Shie Mannor}{shie@ee.technion.ac.il}
\icmladdress{The Technion,
            Haifa, Israel}
            
            ]

\begin{abstract}
We consider a planning problem where the dynamics and rewards of the environment  depend on a hidden static parameter referred to as the \emph{context}. The objective is to learn a strategy that maximizes the accumulated reward across all contexts. The new model, called \emph{Contextual Markov Decision Process} (CMDP), 
can model a customer's behavior when interacting with a website (the learner). The customer's behavior depends on gender, age, location, device, etc. Based on that behavior, the website objective is to \emph{determine} customer characteristics, and to \emph{optimize} the interaction between them.
Our work focuses on one basic scenario--finite horizon with a small known number of possible contexts. We suggest a family of algorithms with provable guarantees that learn the underlying models and the latent contexts, and optimize the CMDPs. Bounds are obtained for specific naive implementations, and extensions of the framework are discussed, laying the ground for future research.
\end{abstract}

\section{Introduction}

Markov Decision Processes (MDPs) are commonly used to describe dynamic behavior in multiple fields such as signal processing, robotics, games, advertising, health and queues management \cite{puterman2005markov, white1993survey}. When multiple trajectories are observed from a single source, a question in this context is the following: ``Does each observed trajectory follow the same transition probabilities"? When the answer is affirmative, these transitions can be evaluated through standard maximum likelihood estimation \cite{boas2006mathematical}, and many techniques exist for different setups, most notably are the \emph{Hidden Markov Models (HMMs)} method \cite{elliott1995hidden} in modeling and the \emph{Partially Observed Markov Decision Processes (POMDPs)} \cite{aberdeen2003revised} in control.

However, in many applications there are additional exogenous variables that affect the model. We refer to these variables collectively as the \emph{context}. For example, the temporal behavior of sugar levels for diabetes patients is largely influenced by their age and gender. Similarly, humidity measurements are greatly affected by the geographical location of the measurement device. Since these context variables do not change within each measurement, the standard solution of incorporating them into the state creating a much larger MDP or POMDP seems faulty as it reduces the generalizing power of the model. Specifically, incorporating static features into the state forms distinct unconnected dynamic chains. As transition probability between states with different contexts is always zero, a more compact model would be separate transition matrices for each context instead of one double sized matrix.

\subsection{Motivation for Contextual Dynamics}

A real world example for latent context learning is the problem of \emph{identifying the user}. Consider a large content website. Such a website has two main activities: (a) suggesting relevant content to its users and (b) presenting alluring ads for profit. Current methodologies that determine the relevance of the content and the ads require the user profile: age, gender, income level, device, location, etc. Usually, in order to determine whether a certain user is revisiting the website, mechanisms such as (HTTP) cookies are used. But in many cases these mechanisms are insufficient. What if the website does not have any prior information about the user (also known as the \emph{cold start problem}; \citealt{kohrs2001improving})? Can we learn the user's age or gender by observing his interaction with the website? In other words, given a trajectory of the pages visited by the user can we predict (\emph{cluster} or \emph{classify}) the user's profile? And more importantly, can we take advantage of such clustering and tailor the policy to the user?

This type of problem exists also in scenarios where we have information about the owner of a device, but several users use it and we want to identify them (such as children using their parents tablets). In this work we suggest to model the user interaction in a \emph{Markovian} fashion in order to identify the user \cite{meyn2009markov}.

A more elaborate scenario is when the user has been identified, and we want to optimize the content and ads presented to him where the optimization criterion, for instance, is maximizing the user's time spent in the website. In such cases, we model the interaction of a user as a \emph{Markov Decision Process} where different user's groups may be modeled and optimized according to their context. In on-line advertising, solutions to such optimization problem are highly valuable, where the correct identification of users leads to higher \emph{click through rates} (CTRs; \citealt{richardson2007predicting}). Hence, the ultimate goal is \emph{on-line learning an optimal control when both the context, and the model's parameters are unknown}. Notice that a sub-goal in this case is the one described above: learning the underlying Markov dynamics.

Our work's main contribution is presenting a general algorithm with provable guarantees for the finite horizon episodic contextual MDP setup. Considering a specific implementation, we provide regret analysis and empirical parametric sensitivity analysis. Additionally, we discuss two applicative extensions of the model: the case of infinitely many contexts, and the concurrent Reinforcement Learning (RL) \cite{silver2013concurrent} setting. The reader should bear in mind the solutions suggested are preliminary and our focus is on presenting the problems along with their derived trade-offs, as well as setting the bar for future research.

\subsection{Related Literature}

Many previous works are related to the setup presented in this paper. In Hidden Markov Models (HMMs; e.g., \citealt{elliott1995hidden}) the Markovian state dynamics are latent, and the observed samples are transformation of the sources output. Works by \citealt{wilson1999parametric} and \citealt{radenen2014handling} had considered adding context to HMMs, however the context in their model only affects the observations distribution and not the state dynamics.

A natural extension of HMMs to a control setting is POMDPs \cite{aberdeen2003revised}. CMDPs can be modeled using POMDPs by setting the context variable to be the origin and each possible MDP as a distinct outgoing chain. However, POMDPs are too general and complex to capture the essence of the CMDP setup. In addition, POMDPs usually assume an underlying distribution on the contexts which we refrain from doing. 

The notion of context was borrowed from closely related works in the Multi-Armed Bandits (MAB) literature \cite{sutton1998introduction,bubeck2012regret}, called \emph{Contextual-MAB} \cite{langford2007epoch,lai1985asymptotically}. The extension to the regular setting of MAB is that before the learner plays his turn, a context is presented to the user. Another similar paper is by \citet{maillard2014latent}, describing a setup in which only the rewards depend on an unobserved latent variable. They consider three cases: the reward function and context are known, the reward function is known but the context is not, and where both are unknown. 

Other related literature considers model selection in MDPs. \citet{doya2002multiple} propose an architecture for \emph{multiple model-based reinforcement learning} (MMRL). Their approach decomposes a complex task into multiple domains in time and space and use a responsibility signal to weigh the outputs of multiple models and to gate the learning of the prediction models and controllers. Hence, responsibility signal measure how to mix different models such that various areas in the state space could be more easily modeled. A similar approach was used in learning meta-parameters of motor skills \cite{kober2012reinforcement}. In our work, we try to identify a single source that fits all the space. 

Another relevant problem is that of on-line representation learning \cite{nguyen2013competing,maillard2011selecting}, dealing with finding the best state space while interacting with the environment. Differently, in the CMDP setup all models share the same state space. Finally, \citet{kiseleva2013predicting} consider contextual MDPs, but from a purely applicative perspective--they relate directly to web advertising by modeling user types.

We conclude with a short comparison of CMDPs with other known extensions of the MDP model:
\begin{enumerate}
	\item In \textbf{Contextual HMMs} the context affects only the observation distribution, and there is no control.
	\item \textbf{POMDPs} are a more complex structure generalizing CMDPs. Since in our case the hidden parameters are constant over time a simpler solution might exist. In addition, some distribution over contexts is assumed.
	\item In \textbf{Multi-model RL} the dynamics and rewards are composed of a convex combination of several models, meaning that in each trajectory there can be more than one valid model.	
	\item The problem of \textbf{state representation} is that of finding a suitable state space for given observations. In our case the state space is the same for all models, allowing more efficient solutions.
	\item Models described by \textbf{robust MDPs} \cite{nilim2005robust,wiesemann2013robust} consider uncertainty in the transitions and rewards. CMDPs can be viewed as such, where the uncertainty is not rectangular (around each state-action pair) but singular--determining one transition sets all of them.	
\end{enumerate}

\subsection{Paper Structure}

The paper is organized as follows: In Section \ref{sec:CMDP} we formally define the CMDP setting, compare it to other models from the literature and introduce the general setup. Section \ref{sec:ProblemDefinition} describes the problem in more details and presents a general form algorithm to solve it. One specific instance of the algorithm is analyzed, and eventually some possible extensions are presented. In Section \ref{sec:experiments}, we provide experiments and discuss trade-offs in our setup. Finally, in Section \ref{sec:Conclusions} we conclude and lay down future directions.

\section{Contextual Markov Decision Processes}\label{sec:CMDP}

We begin with defining a standard Markov Decision Process (MDP; \citealt{puterman2005markov}).

\begin{Definition}{(MDP Setup)}\label{def:MDP}
A \textbf{Markov Decision Process} is a tuple $(\SSp, \ASp, p(y|x,a), r(x), \pi_0)$ where $\SSp$ is the state space, $\ASp$ is the action space, $p(y|x,a)$ is the transition probability ($y,x\in \SSp, a\in \ASp$), $r(x)$ is a reward function, and $\pi_0$ is the initial state distribution.
\end{Definition}

Given a deterministic horizon $T$, the learner-interaction is as follows. At the beginning of each episode, an initial state $x_0$ is chosen according to the state distribution $\pi_0$. Afterwards, for each $0\le t \le T$, the learner chooses an action according to a policy $\mu(a_t|x_t)$ where $a_t\in \ASp,x_t\in \SSp$. We note that the policy may be a random function. The environment provides a reward $r(x_t)$ and the next state is (randomly) chosen according to $p(x_{t+1}|x_t,a_t)$. In general, the learner's goal is to maximize the following value function:
\begin{equation*}
J^\mu = \mathrm{E}\! \left[ \sum_{t=0}^{T} r(x_t) \Bigg | x_0 \sim \pi_0 ,\mu(a|x)\right]
\end{equation*}
where the expectation is taken over trajectories with respect to the policy $\mu(a|x)$ and the initial distribution $\pi_0$.

When the MDP parameters are given, the problem of finding the policy which maximizes cumulative reward is known in the literature as \emph{planning} \cite{puterman2005markov,bertsekas1995neuro}. When the MDP parameters are unknown in advance, finding the best policy is known as \emph{Adaptive Control} or \emph{Reinforcement Learning} (RL; \citealt{puterman2005markov,bertsekas1995neuro}).

The following definition establishes the extended model considered in the paper, denoted by \emph{Contextual MDPs}. 

\begin{Definition}\label{def:CMDP}
	Contextual Markov Decision Process (CMDP) is a tuple $(\CSp, \SSp, \ASp, \mathcal{M}(c))$ where $\CSp$ is called the context space, $\SSp$ and $\ASp$ are the state and action space correspondingly, and $\mathcal{M}$ is function mapping any context $c\in \CSp$ to an MDP $\mathcal{M}(c)=(\SSp, \ASp, p^c(y|x,a), r^c(x), \pi^c_0)$.
\end{Definition}
So essentially, CMDP is simply a set of models sharing the same state and action space.

The simplest scenario in a CMDP setting is when the context is simply \emph{observable}. In this setting, the problem reduces to correctly generalizing the model from the context. If the observable context $c$ is \emph{finite} where $|\CSp|=K$ , then with no further assumption, one can simply learn $K$ different models. 

An interesting problem arises when $K$ scales with the number of sampled trajectories. For instance, consider the problem of targeted advertising: Given behavioral patterns and side information of many customers, companies usually seek to group the consumers so they can target their needs and habits. Since side information usually resides in a very large set (for example, the cross-product of gender, age, etc.), in practice it is aggregated when the number of clusters depends on the amount of available data.

The model aggregation problem is not considered in this work, and instead we focus on latent contexts for the rest of the paper. Additionally, we assume the initial state distribution and rewards are context independent, maintaining the hardness of the problem while greatly simplifying the writing. Finally, we adopt the common $[0,1]$-bounded reward assumption.

\subsection{General Setup} 

We define the general setup as follows: The context space consists of $K$ possible contexts. The time axis is divided into $H$ episodes, denoted by $e_1,\ldots,e_H$. In the beginning of each episode, the environment chooses a context $c \in \CSp$ (in a random, adversarial or any other fashion). Afterwards, an initial state is randomly chosen according to an initial state distribution $\pi^c_0$. A trajectory of length $T$ is generated where $T$ is a \emph{stopping time} \cite{meyn2009markov}. Then, for the chosen MDP an interaction as described in Definition \ref{def:MDP} is applied until the end of the trajectory.


\section{Problem Definition and Solution} \label{sec:ProblemDefinition}

We assume a small finite $\CSp$, and that $T$ is bounded almost surely, denoting this setup as \emph{finite sources episodic CMDP}. The goal is maximizing over the cumulative rewards from all trajectories by the $H$'th trajectory, for increasing $H$. Therefore, we measured performance with respect to $H$. Ideally, a good policy should optimize the trade-off between exploration and exploitation of the current chain. However, unlike the standard RL setup, the exploration in this case should consider not only the model's parameters, but also the hidden context. 

We measure our performance with the notion of \emph{regret}: the difference between the cumulative reward and the cumulative reward obtained by an agent satisfying some optimality property. For example, in infinite horizon RL the cumulative discounted reward is compared against an agent with knowledge of the true model who can therefore start from the optimal policy \cite{auer2009near}; the faster the regret bound converges to $0$ with $T$ the better. 

Similarly, we compare ourselves to the all knowing agent applying the optimal policy for the correct context at each trajectory. In our setup, since $T$ is bounded the regret is evaluated mainly with respect to the number of trajectories $H$. Notice though, that in each new trajectory some loss is guaranteed until the correct context is identified. Therefore, the regret will always be linear in $H$. A different optimal agent, when there is some prior distribution over contexts, can be chosen to perform the solution of the resulting POMDP, but there may be other appropriate choices. The problem of redefining the regret to obtain more meaningful bounds was left for future research.

\begin{Definition}
	For the problem of finite sources episodic CMDP we define the \textbf{regret} over $H$ trajectories to be:
	\begin{equation}
	Regret = \sum_{h=1}^H J^*_h - \sum_{h=1}^H \sum_{t=1}^{T_h} r_{h,t} \quad\quad,
	\end{equation}
	where $J^*_h$ is the optimal value function in $T_h$ steps for the context chosen in the $h$'th trajectory, and $r_{h,t}$ is the reward obtained by the agent in the $h$'th trajectory at the $t$'th step.
\end{Definition}

In order to solve the problem of regret minimization we introduce the CECE general framework (Cluster-Explore-Classify-Exploit) that partitions the trajectories to mini-batches. In the beginning of each mini-batch, all previously seen trajectories are used to form $K$ distinct models through Algorithm 1 (Cluster). Then, for each new trajectory in the current mini-batch the agent generates a partial trajectory using Algorithm 2 (Explore). The partial trajectory is then classified to a context by Algorithm 3 (Classify). Finally, Algorithm 4 sets the policy for the remainder of the trajectory (Exploit). In summary:

\begin{mdframed}
\begin{enumerate}[leftmargin=36pt, labelwidth=!, label=\bfseries Alg. \arabic*:]
	\item \textbf{C}luster observed trajectories to $K$ models.
	\item \textbf{E}xplore the context.
	\item \textbf{C}lassify partial trajectory to model.
	\item \textbf{E}xploit the identified model.
\end{enumerate}
\end{mdframed}

The following assumptions and theorem guarantee CECE's performance:
\begin{Definition}
	1. Let: 
	\begin{equation}
	\begin{split}
	M_1 &= (\SSp, \ASp, p_1(y|x,a), r(x), \pi_0), 
	\\
	M_2 &= (\SSp, \ASp, p_2(y|x,a), r(x), \pi_0)
	\end{split}
	\end{equation}
	be two MDPs with the same state space, action space, rewards and initial state distribution. We define $M_2$ to be an $\epsilon$\textbf{-approximated model} of $M_1$ if for every state-action pair $(s, a) \in \SSp \times \ASp$:
	\begin{equation}
		\| {\Pr}_1 (\cdot | s, a) - {\Pr}_2 (\cdot | s, a)   \|_1 \leq \epsilon.
	\end{equation}
	
	2. Let: 
	\begin{equation}
	\begin{split}
	X_1 &= (\CSp_1, \SSp, \ASp, \mathcal{M}_1(c)), 
	\\
	X_2 &= (\CSp_2, \SSp, \ASp, \mathcal{M}_2(c))
	\end{split}
	\end{equation} 
	be two CMDPs with the same state and action space satisfying $|\CSp_1| = |\CSp_2|$. We define $X_2$ to be an $\epsilon$\textbf{-approximated CMDP} of $X_1$ if there exists a matching between the contexts $f:\CSp_1 \leftrightarrow \CSp_2$ such that for every $c \in \CSp_1$ we have that $\mathcal{M}_2(c)$ is an $\epsilon$-approximated model of $\mathcal{M}_1(f(c))$. 
\end{Definition}

\begin{Assumption} \label{Ass:Clustering}
	Let $H_0$ be some constant number of trajectories. For every $H > H_0$ there exists $\delta_1(H), \epsilon(H) > 0$, such that after applying Algorithm 1 on $H$ trajectories, with probability at least $1-\delta_1(H)$ the estimated $K$-models form an $\epsilon(H)$-approximated CMDP of the true CMDP.
\end{Assumption}
Assumption \ref{Ass:Clustering} guarantees that having enough trajectories will drive Algorithm 1 to output an approximated model for each context. It envelopes a hidden assumption that all contexts were observed enough times. Since there is some probability of error $\delta_1$, the clustering procedure must be repeated when more trajectories are presented to ensure diminishing regret; that is the reason a mini-batch scheme is applied.

\begin{Assumption} \label{Ass:ExplorationClassification}
	For every $\epsilon>0$, there exists $\delta_2(\epsilon)$ such that given an $\epsilon$-approximated CMDP, after applying Algorithms 2 and 3 the correct context is identified with probability at least $1-\delta_2(\epsilon)$. In addition, the number of steps taken is a stopping time denoted by $T_{EC}$. 
\end{Assumption}
This assumption assures us each trajectory will be classified correctly with high probability, which will guarantee good performance for exploitation in the next step. Moreover, $T_{EC}$ represents the number of samples needed to differentiate between the models. 

\begin{Assumption} \label{Ass:Exploitation}
	Given an $\epsilon$-approximated model, Algorithms 4 obtains $\textit{Regret} \leq \zeta(\epsilon) $. 
\end{Assumption}
Assumption \ref{Ass:Exploitation} establishes the regret provided by Algorithm 4 when the models are well-approximated.


\begin{Theorem} \label{Thm:CECE}
	Let $H_i$ be the number of trajectories in the $i$'th mini-batch. Then if Assumptions \ref{Ass:Clustering}, \ref{Ass:ExplorationClassification}, \ref{Ass:Exploitation} hold, CECE achieves in the $L$'th mini-batch:
	\begin{equation} 
	\begin{split}
		& Regret \leq \quad (1-\delta_1) H_L ( \delta_2 \mathbb{E} T + (1-\delta_2)(\zeta+\mathbb{E} T_{EC}) )
		\\
		&  \quad\quad\quad\quad\quad\quad\quad + \delta_1 H_L \mathbb{E} T 
	\end{split}
	\end{equation}
	where $\delta_1=\delta_1(\overline{H}), \epsilon=\epsilon_1(\overline{H}), \delta_2 = \delta_2(\epsilon), \zeta=\zeta(\epsilon)$ and $\overline{H}=\sum_{i=1}^{L-1} H_i$. 
\end{Theorem}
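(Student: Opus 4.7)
The plan is to decompose the regret over the $H_L$ trajectories of the $L$'th mini-batch by conditioning on the two stochastic events that the three assumptions control: (i) the success of the Cluster step run on the $\overline{H}=\sum_{i=1}^{L-1}H_i$ previously observed trajectories, and (ii) the success of the per-trajectory Explore--Classify subroutine. Since rewards lie in $[0,1]$, a trivial upper bound on the regret of any single trajectory is its length $T$, so in expectation the worst-case per-trajectory loss is $\mathbb{E}T$. This crude bound will be used whenever a stochastic event fails.

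By Assumption \ref{Ass:Clustering}, applied with the $\overline{H}$ trajectories accumulated before the $L$'th mini-batch starts, Algorithm 1 returns an $\epsilon(\overline{H})$-approximated CMDP of the true CMDP with probability at least $1-\delta_1(\overline{H})$. The clustering outcome is fixed for the entire mini-batch. On the failure event (probability at most $\delta_1$), I bound each of the $H_L$ trajectories by $\mathbb{E}T$, contributing at most $\delta_1 H_L\,\mathbb{E}T$ to the overall regret.

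On the success event (probability at least $1-\delta_1$), I analyze each of the $H_L$ trajectories independently. Conditioning further on the Explore--Classify outcome via Assumption \ref{Ass:ExplorationClassification}: with probability at least $1-\delta_2(\epsilon)$, Algorithms 2 and 3 identify the correct context after a stopping time $T_{EC}$, so the exploration phase contributes at most $\mathbb{E}T_{EC}$ in expectation, and Algorithm 4 is then invoked with an $\epsilon$-approximated model of the correct MDP. Assumption \ref{Ass:Exploitation} then bounds the exploitation regret by $\zeta(\epsilon)$, for a per-trajectory contribution of $\zeta+\mathbb{E}T_{EC}$. With the remaining probability $\delta_2$, the classification is wrong and I fall back to the trivial $\mathbb{E}T$ bound for that trajectory. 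Summing over the $H_L$ trajectories gives the $(1-\delta_1) H_L\bigl(\delta_2 \mathbb{E}T+(1-\delta_2)(\zeta+\mathbb{E}T_{EC})\bigr)$ term, and combining with the clustering-failure contribution yields the claimed inequality.

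The conceptual content is thus a two-level law-of-total-expectation driven directly by the three assumptions. The main subtlety to handle carefully is the scoping of the parameters: $\epsilon$, $\delta_1$, $\delta_2$, and $\zeta$ depend on $\overline{H}$ and not on $H_L$, because the clustering step is not refreshed within a mini-batch. This is what justifies factoring $H_L$ outside the inner parentheses and treating the clustering event as a single Bernoulli random variable shared by the whole batch, while treating the Explore--Classify events as independent across the $H_L$ trajectories. Beyond this bookkeeping and the uniform $\mathbb{E}T$ fallback provided by the $[0,1]$ reward range, no additional machinery (martingale concentration, union bounds over trajectories, etc.) is needed, since the per-trajectory failure probabilities are absorbed directly in the weighted sum rather than propagated through a union bound.
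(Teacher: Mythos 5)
Your proposal is correct and matches the paper's intent exactly: the paper offers no detailed argument, stating only that ``the proof is a straightforward combination of the given assumptions,'' and your two-level conditioning (clustering success/failure, then classification success/failure, with the trivial $\mathbb{E}T$ fallback justified by the $[0,1]$ reward bound) is precisely that combination, reproducing the bound term by term. No further comment is needed.
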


The proof is a straightforward combination  of the given assumptions. 

\subsection{Discussion}
Notice that in order for Assumption $1$ to hold with a meaningful $\epsilon$, when $H_1$ is set each model must be observed sufficiently. This fact should be added as an additional assumption depending on the specific realization of Algorithm 1. Supposedly the subsequent $H_i$'s can be chosen arbitrarily small, utilizing information from new trajectories as soon as it is available. Yet, Algorithm 1 may be computationally expensive, making larger $H_i$'s preferable in practice. Another possible approach to this trade-off is to apply on-line clustering \cite{ailon2009streaming}. 

In essence, Algorithm 1 is a form of Multiple Model Learning (MML) algorithm \cite{vainsencher2013learning} -- each trajectory is a sample from an unknown model (context) and the goal is learning all models simultaneously. It could also be reduced to the clustering problem, where each trajectory is represented as an $S \times S \times A$ vector of its empirical transition matrix. Indeed, some information is lost in this process: the number of samples from each $(s, a)$ pair in the trajectory is ignored despite its effect on the variance around the sampled distribution. So, ideally each trajectory should be reduced to a point with varying variance across dimensions, which gets smaller for longer trajectories. 

Subsequently, one may question whether $\epsilon(H)$ can converge to $0$ for infinitely many trajectories. In our setup, as $T$ grows the trajectories are more distinct, but $T$ is bounded almost surely. So even for large $T$'s, there would be at least some constant portion of the trajectories acting as outliers of the model they originated from, possibly tainting the clusters. One way to solve this issue is through an outlier robust clustering (for example K-median ; \citealt{har2004coresets}).

Next, consider the effect of the trajectories length $T$ on the hardness of the problem. When $T$ is very large, it is much more important to recognize the correct model. Since Algorithm $4$ (exploitation) is applied for a longer duration, it could include an exploratory part to obtain a better model while running the trajectory, in addition to shielding against wrongful classification.

The other extreme case is when $T$ is too small to determine the correct model with high probability. Assuming the models can still be approximated, one reasonable solution would be to try and optimize the worst case performance over all models. This approach is closely related to the problem of Robust MDPs \cite{nilim2005robust} - a formulation of MDPs with uncertainty in the transitions and rewards. When the uncertainty set is rectangular an efficient solution exists. However, in our case it is singular - setting one transiton probability is the same as setting the context along with its related transition matrix; thus the problem is intractable \cite{wiesemann2013robust}. 

When all trajectories are short, it might be impossible to provide an approximation of the true models. Consider for example the extreme case where only one transition is given - unless there is a stationary distribution over contexts the models cannot be learned nor optimized. Subsequently, varied $T$ lengths pose another question: how confident are we in the clustering of each trajectory? Embedding short trajectories might inject more noise to the clustering process than improve it, so some selection is needed to insure proper modeling. This question may relate to the notion of clusters separability \cite{ostrovsky2006effectiveness} - short trajectories can lead to non-separable models that cannot be learned through clustering.

A rather simple realization of Algorithm 2 (exploration) is to apply a fixed policy until some condition is fulfilled. One may consider what is the policy which will achieve this condition with as few steps as possible (since the regret is linear in the number of exploration steps). 

For instance, if there are only two models a logical approach would be to choose actions maximizing the distinction between the models. However, this is non-optimal as actions have future consequences - a distinctive action for one state could lead the agent to an area of the state space which is very similar between the models. 

A follow-up idea is using the original state and action space, and reshaping the rewards to award actions for distinguishing between the models. However,  this solution is still problematic since the underlying transition probabilities are unknown and could be these of either of the possible models. Hence, finding a good exploration policy is an open question we hypothesize to be as difficult as solving a singular Robust MDP.

Finally, consider the effect of increasingly more possible contexts. These increase both the size of the initial $H_1$ required for clustering, and the number of samples needed for model identification $T_{EC}$. The case of infinitely many models requires some changes in the algorithm, as discussed in the end of this section.  

\subsection{A Specific Instance}

We an example for an instance of CECE and substitute in Assumptions \ref{Ass:Clustering}, \ref{Ass:ExplorationClassification}, \ref{Ass:Exploitation}. For simplicity, we assume the trajectory length is a constant $T$ for the remainder of the analysis. The proposed realization was chosen to be trivial to allow simple analysis; It is only a demonstration of the trade-offs in CMDPs and CECE's modularity.

Algorithm 1 is the following scheme:
\begin{enumerate}
	\item For each trajectory $h$, and state action pair $(s, a)$, estimate the transition probability $\widehat{\Pr}_h(\cdot | s, a)$ by its empirical distribution. 
	\item Go over all possible partitions of trajectories to $K$ sets $\{C_k\}_{k=1}^K$, and minimize over the following score:
	\begin{equation}
	\sum_{k=1}^K \sum_{h \in C_k} \max_{s,a} \| \widehat{\Pr}_h(\cdot | s, a) - \widehat{\Pr}_k(\cdot | s, a) \|_1,
	\end{equation}
	where $\widehat{\Pr}_k(\cdot | s, a)$ is the estimated transition probability for all trajectories in the cluster.
\end{enumerate}

This scheme is highly inefficient as it performs an exhaustive search for the best partition. However, as a preliminary result all we require is for it to accommodate Assumption $1$. There are other polynomial time clustering algorithms with guarantees (\citealt{ostrovsky2006effectiveness, arthur2007k} for instance), but their bounds and assumptions would have to be adjusted to our case. 

In Algorithm 2, the uniform policy over actions is applied for a constant number of steps $T_{EC}$. As mentioned above, this procedure could be improved. For once, the total number of steps could be decided on-line according to the confidence. Moreover, there might be other exploration policies that could produce faster identification of the true model, or even combine exploitation in the strategy to generate overall smaller regret.
	
The proposed Algorithm 3 chooses the model obtaining the smallest $L_1$ distance between the set of models and the empirical transition matrix from the partial trajectory. Other possible methods include maximum likelihood, weighted $L_1$ or $L_2$ distance, and methods taking into account the cost of choosing a wrong model.

Lastly, Algorithm 4 was chosen naively to apply the exploitation policy with regards to the estimated model. A more sophisticated approach would be to consider an RL algorithm whose regret with respect to $T$ goes to $0$. Since in our scenario $T$ is constant, the suggested solution is satisfactory. 

We can now quote the necessary assumptions and resulting Corollary:
\begin{Assumption}\label{Ass:Regularity}
Let $\alpha, \beta \in (0,1)$.
\begin{enumerate}	
	\item By the $H$'th trajectory, each model was sampled at least $\beta H$ times.
	\item For some $D$, for every two contexts $c_1, c_2$ and $s, a$: $\| {\Pr}_{c_1} (\cdot | s, a) - {\Pr}_{c_2} (\cdot | s, a)   \| \geq D $.
	\item In every trajectory, each state-action pair is visited at least $\alpha T$ times, and $T$ is large enough: $T \in O(\frac{S}{\alpha D^2} \log (\frac{D}{K S A}))$.
\end{enumerate}
\end{Assumption}
The first part guarantees each model is sampled enough times for the classification to converge. The second part provides a constant difference between the models, such that with enough data the estimated models will be separable. The last part of the assumption is needed to make sure there are enough samples in each trajectory to learn the model. It can be guaranteed by requiring $T_{EC}$ to be long enough, assuming that the induced MDP is ergodic under the uniform policy. 

\begin{Lemma} \label{Lem:Realization}
	If Assumption \ref{Ass:Regularity} holds, the described realization of Algorithms 1-4 satisfy Assumptions 1-3 with:
	\begin{equation}
	\begin{split}
		\epsilon(H) & \in O(K S A  e^{S- \alpha T D^2}),
		\\
		\delta_1(H) & \in O( K S A  e^{S- \alpha T \beta H D^2}),
		\\
		\delta_2(\epsilon) & \in O(K e^{S-T_{EC} (\frac{D}{2} - \epsilon)^2 }), \quad D > 2\epsilon
		\\
		\zeta(\epsilon) & \in O(S^2 T^2 \epsilon).
	\end{split}
	\end{equation}
\end{Lemma}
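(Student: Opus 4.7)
The plan is to verify each of the three stated rates in turn using a single tool: the Weissman et al. $L_1$ concentration inequality $\Pr(\|\widehat P - P\|_1 \geq \eta) \leq 2^S \exp(-n\eta^2/2)$ for empirical transition rows from $n$ samples. Assumption \ref{Ass:Regularity} supplies exactly what is needed: (i) a lower bound of $\alpha T$ visits to each $(s,a)$ per trajectory, (ii) a lower bound of $\beta H$ trajectories generated by each true context, and (iii) a pairwise separation $D$ between distinct contexts' transition rows. Standard union bounds over $(s,a)\in\SSp\times\ASp$ and over the $K$ contexts produce the $KSA$ prefactors and the $e^S$ terms in all claimed rates.

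For Assumption \ref{Ass:Clustering} (Algorithm 1), I would split the argument into a ``correct-partition'' event and an ``accurate-averages'' event. For the partition, it suffices that each trajectory's empirical matrix $\widehat{\Pr}_h$ lies within $D/2$ of its generating context's row in $\max_{s,a}\|\cdot\|_1$: since contexts are $D$-separated, the minimizer of the exhaustive clustering score is then the true $K$-partition by the triangle inequality. A Weissman application with $\eta = D/2$ and $n=\alpha T$, union-bounded over trajectories and $(s,a)$ pairs, controls this event. Conditional on the correct partition, each cluster's pooled estimator uses $\alpha T \beta H$ samples, so a second Weissman step both supplies the full-success probability $\delta_1(H)\in O(KSA\, e^{S-\alpha T\beta H D^2})$ and pins down the residual approximation error $\epsilon(H)$ at the threshold where Weissman's tail crosses from trivial to useful.

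For Assumption \ref{Ass:ExplorationClassification} (Algorithms 2 and 3), the argument is a direct Weissman. Under the ergodicity implicit in Assumption \ref{Ass:Regularity}(3), the $T_{EC}$-step uniform exploration yields $\Theta(\alpha T_{EC})$ samples per $(s,a)$, and the $K$ estimated models remain $D-2\epsilon$-separated by the triangle inequality. Correct classification then follows from the partial trajectory's empirical matrix lying within $(D-2\epsilon)/2 = D/2 - \epsilon$ of the true context's estimate; Weissman plus a union bound over the $K$ candidate models delivers $\delta_2(\epsilon)\in O(K\, e^{S - T_{EC}(D/2-\epsilon)^2})$, valid exactly in the regime $D>2\epsilon$ that the lemma imposes.

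For Assumption \ref{Ass:Exploitation} (Algorithm 4), I would invoke the simulation / value-difference lemma: when $\widehat M$ is an $\epsilon$-approximated model of $M$, a $T$-step telescoping argument combined with the $T$-boundedness of the value function gives $|V^\pi_M(s)-V^\pi_{\widehat M}(s)|\leq c\,T^2\epsilon$ uniformly in $\pi$ and $s$, so the greedy policy on $\widehat M$ incurs single-trajectory regret $O(S^2 T^2\epsilon)$, with the extra $S^2$ arising from state-wise summation in the comparison of $\pi^*_M$ and $\pi^*_{\widehat M}$. The main obstacle is the first assumption: the bounds for $\delta_1(H)$ and $\epsilon(H)$ are coupled through the threshold chosen when invoking Weissman, and arranging the stated exponents in $D$, $T$, and $\beta H$ to separate cleanly rather than collapse to a single mixed rate requires careful bookkeeping. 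Once the clustering statement is in place, the classification and exploitation parts are essentially routine applications of $L_1$ concentration and the simulation lemma.
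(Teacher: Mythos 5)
Your treatment of Assumptions \ref{Ass:ExplorationClassification} and \ref{Ass:Exploitation} matches the paper's route: the classification bound is exactly the paper's argument (Weissman plus triangle inequalities plus a union bound over the $K$ candidate models), and the exploitation bound is the quoted Kearns--Singh simulation lemma, which the paper itself does not even bother to spell out. The gap is in the clustering step. You propose to condition on a ``correct-partition'' event in which \emph{every} trajectory's empirical matrix lies within $D/2$ of its generating context, so that the exhaustive minimizer recovers the true partition exactly. Controlling that event requires a union bound over all $H$ trajectories, which contributes a term of order $H\,SA\,e^{S-\alpha T D^2/8}$ to the failure probability. Since $T$ is a fixed constant in this setup (Assumption \ref{Ass:Regularity} only requires $T \in O(\frac{S}{\alpha D^2}\log(\frac{D}{KSA}))$), this term \emph{grows} linearly in $H$ rather than vanishing, so your route cannot produce the claimed $\delta_1(H) \in O(KSA\,e^{S-\alpha T\beta H D^2})$, whose whole point is that it decreases as more trajectories accumulate.

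The paper avoids this by never demanding exact recovery of the partition. It instead bounds the \emph{average} discrepancy $\frac{1}{H}\sum_{h}\|P_{C^*(h)} - \hat{P}_{C^{opt}(h)}\|$, using the optimality of the exhaustive score and the triangle inequality to compare $C^{opt}$ against $C^*$. A $\delta$-fraction of trajectories (with $\delta = SA\,e^{S-\alpha T\epsilon^2/2}$) are allowed to be outliers; their bounded contribution ($L_1$ distance at most $2$) is absorbed into the model approximation error $\epsilon(H)$, which is why $\epsilon(H)$ carries the per-trajectory exponent $e^{S-\alpha T D^2}$ with no $H$ dependence. Only the concentration of the pooled cluster estimators, which enjoy $\alpha T\beta H$ samples, enters the failure probability $\delta_1(H)$, giving the $\beta H$ in that exponent. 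To repair your argument you would need to replace the all-trajectories union bound with this outlier-tolerant averaging step (or an equivalent argument that a constant fraction of misclustered trajectories only perturbs the cluster centers by $O(\delta)$).
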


	The full proof is available in Section \ref{prf:Realization} of the supplementary material.
	
\begin{Corollary}
	If Assumption \ref{Ass:Regularity} holds, the described realization of Algorithms 1-4 achieves in the $L$'th mini-batch:
	\begin{equation} 
	\begin{split}
		Regret \leq & O( H_L T K e^{S-T_{EC} D^2 / 4 })
		\\
		 & + O( H_L T^2 K S^3 A e^{S- \alpha T D^2} + H_L T_{EC}) 
		\\
		 & + O( H_L T K S A e^{S- \alpha T \beta \overline{H} D^2})  ,
	\end{split}
	\end{equation}
	where $\overline{H}=\sum_{i=1}^{L-1} H_i$.
\end{Corollary}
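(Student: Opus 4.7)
The plan is to instantiate Theorem \ref{Thm:CECE} by substituting the four quantities $\delta_1, \epsilon, \delta_2, \zeta$ supplied by Lemma \ref{Lem:Realization}, using the standing simplification that the trajectory length $T$ is a deterministic constant (so $\mathbb{E}T = T$ and $\mathbb{E}T_{EC} = T_{EC}$). The Corollary is then essentially a bookkeeping consequence of the Theorem: the three $O(\cdot)$ terms in the statement correspond exactly to (i) the misclassification penalty $\delta_2 \mathbb{E}T$, (ii) the successful-classification contribution $(1-\delta_2)(\zeta + \mathbb{E}T_{EC})$, and (iii) the bad-clustering contribution $\delta_1 H_L \mathbb{E}T$.

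First I would verify that after $\overline{H} = \sum_{i=1}^{L-1} H_i$ trajectories, $\epsilon = \epsilon(\overline{H}) \in O(KSA\, e^{S-\alpha T D^2})$ is small enough to satisfy $D > 2\epsilon$, which is needed for the bound on $\delta_2(\epsilon)$ in Lemma \ref{Lem:Realization}; this holds once the last part of Assumption \ref{Ass:Regularity} on $T$ kicks in, since then $\alpha T D^2 \gtrsim S + \log(KSA/D)$. Given $\epsilon \leq D/4$, I can replace the exponent $(D/2 - \epsilon)^2$ by $(D/4)^2 = D^2/16$ inside the big-$O$, or more loosely by $D^2/4$ as the Corollary writes (the hidden constant absorbs the discrepancy), producing $\delta_2 \in O(K e^{S - T_{EC} D^2/4})$.

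Next I would expand the bound from Theorem \ref{Thm:CECE} as
\begin{equation*}
(1-\delta_1) H_L \delta_2 T + (1-\delta_1)(1-\delta_2) H_L (\zeta + T_{EC}) + \delta_1 H_L T,
\end{equation*}
use $(1-\delta_1), (1-\delta_2) \leq 1$ to upper bound the coefficients, and substitute. The $\delta_2 T$ term gives $O(H_L T K e^{S - T_{EC} D^2/4})$. The $\zeta$ term, using $\zeta \in O(S^2 T^2 \epsilon) \subset O(T^2 K S^3 A\, e^{S-\alpha T D^2})$, yields the middle summand, and the separate $T_{EC}$ term contributes $O(H_L T_{EC})$. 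The final $\delta_1 T$ term gives $O(H_L T K S A\, e^{S - \alpha T \beta \overline{H} D^2})$, matching the last summand. Summing the three lines reproduces the Corollary verbatim.

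The main obstacle, such as it is, lies not in arithmetic but in keeping the two qualitatively different uses of $T$ straight: inside $\epsilon(H)$ the exponent involves $\alpha T D^2$ (per-trajectory sample count), whereas inside $\delta_1(H)$ it is $\alpha T \beta H D^2$ (total samples per model across trajectories), and inside $\delta_2$ it is $T_{EC} D^2$ (exploration budget). If these are conflated, the three exponential rates collapse incorrectly. Beyond that subtlety, and beyond checking that $\epsilon$ is indeed small enough to invoke the $\delta_2$ bound, the rest of the derivation is mechanical substitution.
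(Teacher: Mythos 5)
Your proposal is correct and is exactly the derivation the paper intends (the paper gives no explicit proof of the Corollary; it follows by plugging the quantities from Lemma \ref{Lem:Realization} into the bound of Theorem \ref{Thm:CECE} with $\mathbb{E}T=T$, $\mathbb{E}T_{EC}=T_{EC}$, and dropping the $(1-\delta_1),(1-\delta_2)\le 1$ factors). One small imprecision: a hidden multiplicative constant cannot convert $e^{-T_{EC}(D/2-\epsilon)^2}$ into $e^{-T_{EC}D^2/4}$ when $\epsilon>0$, so the exponent $D^2/4$ is really the $\epsilon\to 0$ limit --- but that looseness is already present in the paper's own statement, and you correctly flag the $D>2\epsilon$ requirement needed to invoke the $\delta_2$ bound.
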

Notice that each summand relates to a different error:
\begin{enumerate}
	\item The first summand corresponds to trajectory misclassification. It can point us to proper choice of $T_{EC}$: scaled with $S$ and the distance between models.
	\item The second summand corresponds to the context and model uncertainty. Large $T$ and $\alpha$ are required to estimate each model well enough.
	\item The third summand corresponds to trajectories misclustering. It is the only error which diminishes with $H$, as the exponential multiplicative converges to $0$.
\end{enumerate}


%
%
%
%

\subsection{Extensions}

There are other interesting extensions to the previous setup exhibiting different trade-offs. For once, consider the more complicated scenario when there is an \emph{infinite or unknown number of models}. CECE's mini-batch solution can be adjusted to this case by adding a probability to reject all models in Algorithm 3, but the clustering step will be much harder to evaluate in this case. Consequently, regret analysis requires a more precise setup, for example bounded ratio between the number of contexts and trajectories, or some distribution over contexts.

A more natural setup in web advertising applications is the \emph{concurrent RL} setup \cite{silver2013concurrent}. Assume the agent interacts with multiple infinite horizon trajectories, where each time step one trajectory (which may be new) requires an action. In the CMDP setup, each trajectory originates from a different latent context. The performance in this case should take into account both the length and number of trajectories.

A rather naive solution would be to employ some RL algorithm (for example, Q-learning; \citealt{watkins1992q}) in every trajectory, regardless of the other trajectories. This approach ignores information on the model obtained from other trajectories sharing the same context. Thus, if there are many short trajectories it could produce high regret.

A different solution is applying some variation of CECE's scheme - in each time step in a trajectory: first cluster (Algorithm 1), and then either (a) choose an option which explores the context (Algorithm 2), or (b) classify the partial trajectory (Algorithm 3) and choose an action exploiting the context (Algorithm 4). Even though the trajectory length is unbounded, as long as more model samples are obtained from other trajectories the error in the exploitation phase decreases. Actual regret bounds for both approaches depend on the parameters and assumptions of the specified problem. When there are few long trajectories the first independent RL approach would prevail (with regret of $O(H\sqrt{T})$; \citealt{auer2009near}) , while many shorter trajectories are better dealt with a CECE variant (with regret of $O(H T_{EC} + \sqrt{THK})$ for equal probability contexts).

\section{Experiments}\label{sec:experiments}

In this section we discuss the trade-offs that exist in the CMDPs settings. In the first experiment we test only the clustering part in CECE. We consider a CMDP with $K = 5$ equal probability contexts, $| \ASp | = 2$ actions and $| \SSp | = 100$ states where the transition matrix for each context was drawn from a uniform distribution. We generate $H$ trajectories of a constant length $T$ sampling actions uniformly. For the purpose of scoring the clusters we calculate the entropy of each distribution over clusters for each correct context, and average the results according to the number of samples from that context. Thus, when the trajectories are perfectly clustered, for each context the entropy will be $0$ and so will be the average. The worst possible score $\log (K)$ results from independent clusters and contexts. The clustering algorithm we used in this case was $K$-means \cite{duda2012pattern} on the vectorized empirical transition matrices, the results were averaged over $100$ trials and were added error bars of one standard deviation.

We examined the following: (1) How long should trajectories be to obtain favorable clustering? (2) How the quality of the clustering depends on the number of episodes, for various trajectories lengths? In the first part of the experiment (top plot in Figure \ref{fig:C-MC}) we generate $H=100$ trajectories and present the score as a function of the trajectories length $T$. In the second part of the experiment (bottom plot in Figure \ref{fig:C-MC}), we generate trajectories of varying lengths $T=2000, 5000, 8000$ and measure the score as a function of the number of episodes $H$.


\begin{figure}[h]
	\caption{Experiment 1 \label{fig:C-MC}}
	\centering
	\includegraphics[width=0.5\textwidth]{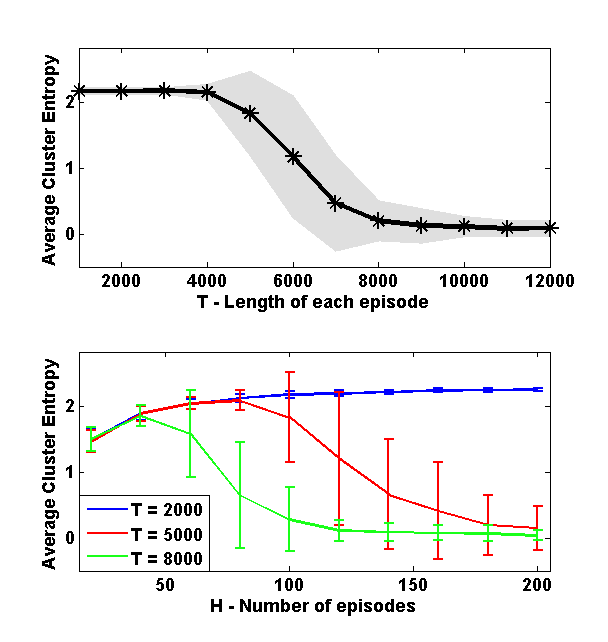}
\end{figure}
We draw the following conclusions: (1) There is a phase transition in the clustering performance with respect to $T$: below a certain threshold (here $T=4000$) the clustering utterly fails, followed by a short adjustment period, where finally (here at $T=8000$) the clustering succeeds almost certainly. (2) If the trajectories are too short, the clustering will fail even when increasing the number of episodes. (3) If the trajectories are sufficiently long, additional episodes improve the clustering quality (as implied by Lemma \ref{Lem:Realization}).

Next, we experimented with the full CECE algorithm. We simulated a CMDP with $|\SSp|=100$ states, $|\ASp|=4$ actions and $K=20$ contexts of equal probability. Each trial consists of $H=100$ episodes of length $T=2000$. The results were averaged over $20$ experiments. The parameter $T_{EC}$ sets the portion of the trajectory time steps dedicated to identify the model, and was taken to be $\eta \cdot T$, $\eta = 0.3$. The learning policy employed by Algorithm 2 was taken to be uniform over all actions. The exploitation algorithm used is Q-learning \cite{bertsekas1995neuro}. 

We performed four experiments where in each of the experiments all the parameters excluding one were fixed. The average reward throughout the experiment is measured. The results are presented in Figure \ref{fig:CMDP}. On the top-left and bottom-right plots we can see how CECE behaves as the number of episodes and trajectory length increase. As more data are available, the average reward increases since the clustering phase performs better and the models are better learned. Similarly, the average reward decreases as more models are introduced (top-right plot) since it is harder to cluster and learn each model. Notice that for constant proportion $\frac{T_{EC}}{T}$ there will always be a difference between the optimal and the achieved value due to the identification phase. 

\begin{figure}[h]
	\caption{Experiment 2 \label{fig:CMDP}}
	\centering
	\includegraphics[width=0.48\textwidth]{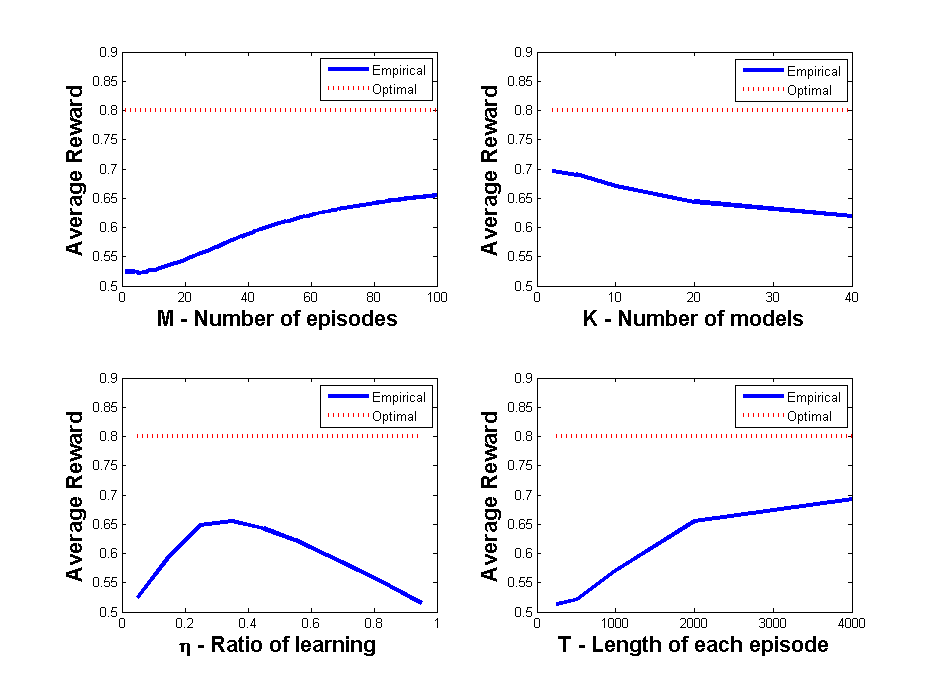}
\end{figure}

An interesting result is presented in the bottom-left plot. The parameter $\eta=\frac{T_{EC}}{T}$ describing the portion of samples taken to identify the correct model. The resulting plot represents the exploration-exploitation trade-off for our suggested model: How many samples are used to identify the correct model against how many of them are used to optimize the C-MDP.

\section{Conclusions and Future Work} \label{sec:Conclusions}

In this work we presented a new framework for modeling multiple Markovian sources with sequential decision making. While our models can be encompassed in existing models (e.g., POMDPs; \citealt{aberdeen2003revised}) the proposed setup offers much flexibility in modeling both observable and latent static context while maintaining computational tractability. We demonstrated that under certain conditions one can overcome two fundamental problems: (1) learning the model parameters, and (2) optimizing on-line the action within an RL framework. We suggested and analyzed basic algorithms when the number of contexts is finite.

This paper is but a first step in developing the contextual MDPs framework. Since CECE is a modular solution its performance can be improved by independent upgrades to its building blocks, such as: 
\begin{enumerate}
	\item The clustering techniques we used are somewhat inefficient and does not consider the confidence of each trajectory. 
	\item Data and models dependent learning policies could possibly classify the trajectory in less steps.
	\item Reward oriented context classification can lead to improved overall regret.
	\item Incorporating context exploration in the exploitation phase hedges against miss-classification.
\end{enumerate}

There are other schemes to solve CMDPs. A rather similar approach is combining the Exploration-Classification-Exploitation steps to form a belief over models and solve accordingly (like MMRL; \citealt{doya2002multiple}). Another reasonable approach when there is some distribution over contexts is to model the problem as a POMDP \cite{aberdeen2003revised}, and then learning and optimizing it. Finally, it is possible to view the optimization problem as a robust MDP (\citealt{nilim2005robust}; where uncertainty is on which model the data come from). While solving the resulting Robust MDPs directly is hard computationally, a rectangular relaxation can be possibly used to provide an approximated result; one future direction is to investigate this approximation. 

The concurrent RL setup \cite{silver2013concurrent}, as well as the case of many or even infinitely many contexts are of practical importance. We have presented rough ideas on how to pursue these, but the exact theoretical setup requires a more precise definition (what guarantees could be made, what assumptions must hold and so on). 

The issues of computational efficiency and sample complexity are important and were not tackled in this paper. Despite the availability of big data in many appealing venues,  the state, action and context spaces may scale accordingly. Hence, an interesting theoretical and practical concern is the error and regret rates for finite sample size; finding these requires a more subtle analysis and is left for future work. 

Subsequently, for very large state or action spaces, straightforward implementation of the model-based approach will fail as the number of samples required to learn the model grows accordingly. Solving this problem within the CMDP framework may introduce some intriguing connections. For example, if the linear function approximation technique is used \cite{sutton1998introduction}, the problem of clustering same-policy trajectories corresponds to the subspace clustering problem \cite{vidal2010tutorial}. 

In conclusion, from an algorithmic and analytic points of view the theoretical trade-off between learning, exploration, optimization, and control of CMDPs is still very much an open question.

\bibliography{ContextualBib}{}
\bibliographystyle{icml2014}

\newpage
\appendix
\onecolumn

%
%
\section{List of Notations} \label{app:notations}
\begin{tabular}{|l|l|}
	\hline
	Notation & Meaning \\ \hline
	$\SSp$ & State space or number of states \\ \hline		
	$\ASp$ & Action space or number of actions \\ \hline	
	$T$ & Time horizon \\ \hline
	$t$ & Time index $t=0..T$ \\ \hline
	$H$ & Number of trajectories in batch data \\ \hline
	$H_L$ & Number of trajectories in the $L$'th mini-batch \\ \hline
	$\CSp$ & Number of possible contexts \\ \hline
	$J^\mu_M$ & Value of policy $\mu$ in model $M$ \\ \hline
	$D$ & Minimal inf-distance between two distinct models. \\ \hline
\end{tabular}

%
%

\section{Useful Lemmas}
The following Lemmas are used in the proofs:

\begin{Lemma}{\cite{weissman2003inequalities}}\label{Lem:Weissman}
	Let $P$ be a probability distribution on the set $\SSp={1,..,S}$. Let $\mathbb{X}^m = X_1,X_2,...,X_m$ be independent identically distributed random variables distributed according to $P$. Then for all $\epsilon > 0$,
	\begin{equation}
	\Pr (\| P - \hat{P}_{\mathbb{X}^m} \|_1 \geq \epsilon ) \leq e^{S-m\epsilon^2 / 2}
	\end{equation}
\end{Lemma}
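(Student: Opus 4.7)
The plan is to prove this concentration inequality by exploiting the variational representation of the $L_1$ norm on the probability simplex and then reducing to a scalar Hoeffding bound, which is the standard route to Weissman-type inequalities.

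First I would rewrite the $L_1$ deviation as a supremum over sign vectors. Using the identity $|z| = \max_{\sigma \in \{-1,+1\}} \sigma z$ coordinate-wise,
\begin{equation*}
\| P - \hat{P}_{\mathbb{X}^m} \|_1 \;=\; \sum_{i=1}^S | P(i) - \hat{P}_{\mathbb{X}^m}(i) | \;=\; \max_{\sigma \in \{-1,+1\}^S} \sum_{i=1}^S \sigma_i \bigl( P(i) - \hat{P}_{\mathbb{X}^m}(i) \bigr).
\end{equation*}
This turns the event $\{\|P-\hat{P}_{\mathbb{X}^m}\|_1 \geq \epsilon\}$ into the union, over the $2^S$ sign vectors $\sigma$, of the events $A_\sigma := \bigl\{ \sum_i \sigma_i (P(i) - \hat{P}_{\mathbb{X}^m}(i)) \geq \epsilon \bigr\}$.

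Next I would control each $A_\sigma$ by Hoeffding's inequality. Fix $\sigma$ and define $Z_j := \sigma_{X_j}$ for $j = 1,\ldots,m$. The $Z_j$ are i.i.d., bounded in $[-1,+1]$, and have mean $\mathbb{E}[Z_j] = \sum_i \sigma_i P(i)$. Their empirical mean equals $\frac{1}{m}\sum_j \sigma_{X_j} = \sum_i \sigma_i \hat{P}_{\mathbb{X}^m}(i)$. Hence
\begin{equation*}
\Pr(A_\sigma) \;=\; \Pr\!\left( \mathbb{E}[Z_1] - \tfrac{1}{m}\sum_{j=1}^m Z_j \;\geq\; \epsilon \right) \;\leq\; \exp\!\left( -\frac{2 m \epsilon^2}{(1-(-1))^2} \right) \;=\; e^{-m\epsilon^2 / 2}.
\end{equation*}

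Finally I would take a union bound over the $2^S$ sign vectors and use $2^S \leq e^S$ (since $\ln 2 < 1$):
\begin{equation*}
\Pr\!\bigl( \| P - \hat{P}_{\mathbb{X}^m} \|_1 \geq \epsilon \bigr) \;\leq\; \sum_{\sigma \in \{-1,+1\}^S} \Pr(A_\sigma) \;\leq\; 2^S \cdot e^{-m\epsilon^2/2} \;\leq\; e^{S - m\epsilon^2/2},
\end{equation*}
which is the stated bound. The only subtle step is the first one: recognizing that the $L_1$ distance between two probability vectors admits this clean linear-functional representation, which converts an $S$-dimensional concentration question into $2^S$ copies of a one-dimensional one. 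Everything after is routine Hoeffding plus a union bound; the mild slack $2^S \leq e^S$ replaces the tighter constant $2^S - 2$ that appears in the original reference but is not needed here.
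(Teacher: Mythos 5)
Your proof is correct, and the paper itself offers no proof of this lemma---it is quoted directly from \citet{weissman2003inequalities}. Your argument (the sign-vector variational representation of the $L_1$ norm, Hoeffding's inequality for each of the $2^S$ linear functionals, and a union bound with $2^S \leq e^S$) is exactly the standard derivation from that reference, so there is nothing to reconcile; the only difference is the harmless slack you already note, replacing the reference's $2^S-2$ factor by $e^S$ as the paper's statement requires.
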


\begin{Lemma}{\cite{kearns2002near}}
	Let $M$ be an MDP over $S$ states, and $\hat{M}$ be an $O(\epsilon)$-approximation of $M$. Then for any policy $\mu$: 
	\begin{equation}
		| J^\mu_M - J^\mu_{\hat{M}} | \leq S^2 T^2 \epsilon ,
	\end{equation}
	and consequently for the optimal policy in each MDP correspondingly:
	\begin{equation}
		| J^*_M - J^*_{\hat{M}} | \leq 3 S^2 T^2 \epsilon , 
	\end{equation}
\end{Lemma}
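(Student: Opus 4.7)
The plan is to prove the two bounds via a finite-horizon simulation-lemma argument, inducting on the time-to-go. Define $V^{\mu}_t(s)$ and $\hat{V}^{\mu}_t(s)$ as the expected sum of rewards under $\mu$ from state $s$ over the last $T-t$ steps, in $M$ and $\hat{M}$ respectively; both are bounded in $[0,T]$ since the reward is in $[0,1]$ and the reward function and initial distribution are shared between the two MDPs. The key identity is the one-step Bellman backup $V^{\mu}_t(s) = r(s) + \sum_a \mu(a|s)\sum_y p(y|s,a) V^{\mu}_{t+1}(y)$, and analogously for $\hat{V}^{\mu}_t$ with $\hat{p}$.

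Set $\Delta_t(s) = V^{\mu}_t(s) - \hat{V}^{\mu}_t(s)$ and apply the standard add-and-subtract decomposition:
\begin{equation*}
\Delta_t(s) = \sum_a \mu(a|s) \sum_y p(y|s,a)\,\Delta_{t+1}(y) \;+\; \sum_a \mu(a|s) \sum_y \bigl(p(y|s,a) - \hat{p}(y|s,a)\bigr)\hat{V}^{\mu}_{t+1}(y).
\end{equation*}
The first summand is a convex combination of $\Delta_{t+1}$ values and is therefore bounded in absolute value by $\|\Delta_{t+1}\|_\infty$. For the second, Hölder's inequality together with the $\epsilon$-approximation hypothesis and the sup-norm bound $\|\hat{V}^{\mu}_{t+1}\|_\infty \le T$ gives a bound of $\epsilon T$ per step. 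Iterating $\|\Delta_t\|_\infty \le \|\Delta_{t+1}\|_\infty + \epsilon T$ from $t=T$ down to $t=0$ with $\Delta_T \equiv 0$, and then taking expectation under the shared $\pi_0$, yields $|J^{\mu}_M - J^{\mu}_{\hat{M}}| \le \epsilon T^2$, which is absorbed into the stated $S^2 T^2 \epsilon$ bound together with any constant hidden inside the ``$O(\epsilon)$-approximation'' hypothesis.

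For the optimal-value bound I would use a standard sandwich argument with the two optimal policies $\mu^*$ (optimal in $M$) and $\hat{\mu}^*$ (optimal in $\hat{M}$). On one side, $J^*_M - J^*_{\hat{M}} = J^{\mu^*}_M - J^{\hat{\mu}^*}_{\hat{M}} \le J^{\mu^*}_M - J^{\mu^*}_{\hat{M}} \le S^2 T^2 \epsilon$, using optimality of $\hat{\mu}^*$ in $\hat{M}$ followed by the first part of the lemma. A symmetric chain with $\hat{\mu}^*$ in the leading role yields $J^*_{\hat{M}} - J^*_M \le S^2 T^2 \epsilon$. Combining the two directions with any slack coming from the $O(\epsilon)$ constant gives the stated factor of $3$.

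The main technical obstacle is bookkeeping rather than conceptual: I would need to confirm that the induction is valid for possibly history-dependent or non-stationary policies (it is, since the one-step Bellman backup used in the induction step makes no assumption of stationarity), and I would need to check carefully where the extra $S^2$ factor in the cited Kearns--Singh bound arises relative to the cleaner $T^2$ scaling obtained above — most likely from using a cruder sup-norm pass over states inside the recursion, or from the ``$O(\epsilon)$'' hypothesis absorbing a factor depending on $|\SSp|$. Beyond tracking those constants, every step is mechanical.
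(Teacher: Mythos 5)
The paper never proves this lemma --- it is quoted from Kearns and Singh (2002) and stated without proof in the ``Useful Lemmas'' appendix --- so there is no internal argument to compare yours against; your proposal has to stand on its own, and it does. The backward induction with the add-and-subtract decomposition, the H\"older step pairing the $L_1$ deviation of the kernels against $\|\hat{V}^{\mu}_{t+1}\|_\infty \le T$, and the base case $\Delta_T \equiv 0$ (legitimate precisely because the reward function and initial distribution are shared, per Definition 3) give $|J^\mu_M - J^\mu_{\hat{M}}| \le \epsilon T^2$, and your sandwich argument for the optimal values is the standard one and yields the two-sided bound with constant $1$. Note that both of your conclusions are strictly \emph{stronger} than the stated ones: no $S^2$ factor and no factor of $3$ appear. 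That discrepancy is not a gap --- an upper bound of $\epsilon T^2$ implies the claimed $S^2 T^2 \epsilon$ --- but your diagnosis of its source is the right one to flag: Kearns--Singh define approximation entrywise rather than in $L_1$ (and also perturb the rewards), so translating their hypothesis into the $L_1$ notion used in this paper's Definition 3 absorbs state-space factors, and the $3$ in the second inequality is the kind of slack one gets from a three-term chain through the approximate model rather than the tight sandwich you use. Your check that the induction tolerates non-stationary (time-indexed) policies is also the correct thing to verify, since the finite-horizon optimal policy is generally non-stationary. No gaps; if anything you should state the sharpened constants rather than the quoted ones.
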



%
%

\section{Proof of Lemma \ref{Lem:Realization}} \label{prf:Realization}

\setcounter{Lemma}{0}
\begin{Lemma} 
	If Assumption \ref{Ass:Regularity} holds, the described realization of Algorithms 1-4 satisfy Assumptions 1-3 with:
	\begin{equation}
	\begin{split}
	& \epsilon(H) \in O(K S A  e^{S- \alpha T D^2}),
	\\
	& \delta_1(H) \in O( K S A  e^{S- \alpha T \beta H D^2}),
	\\
	& \delta_2(\epsilon) \in O(K e^{S-T_{EC} (\frac{D}{2} - \epsilon)^2 }), \quad D > 2\epsilon
	\\
	& \zeta(\epsilon) \in O(S^2 T^2 \epsilon).
	\end{split}
	\end{equation}
\end{Lemma}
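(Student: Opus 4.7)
The lemma has four quantities to bound, and my plan is to handle them using the two tools quoted earlier: Weissman's $L_1$ concentration inequality for empirical distributions and the Kearns-Singh simulation lemma. I will work in order of increasing difficulty.

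The exploitation regret $\zeta(\epsilon) \in O(S^2 T^2 \epsilon)$ follows immediately from the simulation lemma: Algorithm 4 plays the greedy policy $\hat{\mu}^\star$ for the $\epsilon$-approximated model $\hat{M}$, so comparing $J^{\hat{\mu}^\star}_M$ against $J^\star_M$ and $J^\star_{\hat{M}}$ by two applications of the simulation lemma yields the claimed single-trajectory gap. For the classification failure probability $\delta_2(\epsilon)$, I consider the uniform-policy empirical $\widehat{\Pr}$ produced by Algorithm 2 over $T_{EC}$ steps, together with Algorithm 3's nearest-model rule. Writing $c$ for the true context and $\widehat{\Pr}^k$ for the clustered estimates (each an $\epsilon$-approximation of $\Pr^k$ by Assumption 1), the separation assumption gives $\max_{s,a}\|\widehat{\Pr}^c - \widehat{\Pr}^k\|_1 \geq D - 2\epsilon$ for $k \neq c$, so by the triangle inequality correct classification is guaranteed whenever $\max_{s,a}\|\widehat{\Pr} - \Pr^c\|_1 < D/2 - \epsilon$. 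Applying Weissman per state-action pair and union-bounding over the $K$ candidate models yields the $O(K e^{S - T_{EC}(D/2-\epsilon)^2})$ form under the side-condition $D > 2\epsilon$.

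The two clustering bounds are the heart of the lemma and my plan is a two-step argument: (i) with high probability the combinatorial score of Algorithm 1 is uniquely minimized by the correct partition, and (ii) conditioned on that event, each pooled cluster empirical is close to its true transition in max-$L_1$ distance. For step (i), Weissman applied to each single-trajectory per-$(s,a)$ empirical shows that, under Assumption 4, every such empirical lies within $D/4$ of its true-context transition in $L_1$ with per-pair failure probability $e^{S - \alpha T D^2/32}$; union-bounding over the $SA$ pairs, $K$ contexts, and $H$ trajectories gives the global miss probability, and the regularity bound $T \in \Omega\!\bigl(\tfrac{S}{\alpha D^2}\log\tfrac{D}{KSA}\bigr)$ is exactly strong enough to keep this controlled. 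On the resulting clean event, intra-cluster empirical distances are at most $D/2$ while cross-cluster distances exceed $D/2$, so a swap argument shows that any misplacement strictly increases the Algorithm 1 score.

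For step (ii), aggregating at least $\beta H$ correctly clustered trajectories gives at least $\alpha T \beta H$ samples per $(s,a)$; a second application of Weissman with a $K S A$ union bound produces the stated forms, with the $\beta H$ in the exponent of $\delta_1$ arising from aggregation, and the $\alpha T$ in the exponent of $\epsilon$ inherited from the single-trajectory separation level that makes the clustering trustworthy in the first place. The main obstacle is step (i): certifying that the correct $K$-partition strictly beats every competitor despite the sum-of-maxima structure of Algorithm 1's objective. My plan to handle this is the swap argument above — any misplaced trajectory contributes at most $O(\epsilon_0)$ to the max-$L_1$ score in its correct cluster but at least $D/2 - O(\epsilon_0)$ once inserted into an incorrect cluster, so the net change in the objective is strictly positive whenever $\epsilon_0 < D/4$, which is precisely the clean event provided by the regularity assumption on $T$.
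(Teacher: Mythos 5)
Your treatment of $\zeta(\epsilon)$ and $\delta_2(\epsilon)$ is sound and matches the paper's route (the Kearns--Singh simulation lemma for the former; triangle inequality, Weissman, and a union bound over the $K$ candidates for the latter). The gap is in your step~(i) for the clustering bounds, and it is twofold. First, the swap argument does not close for Algorithm~1's objective: the cluster ``center'' $\widehat{\Pr}_k$ is the pooled empirical of whatever trajectories currently sit in cluster $k$, so moving one trajectory displaces the centroids of both affected clusters and perturbs the cost of \emph{every} other member. The misplaced trajectory's own cost increase of order $D/2$ must be weighed against a possible aggregate decrease of order $n \cdot (\text{centroid displacement})$ over the $n$ inliers, which your sketch does not control; and even if single swaps were handled, global minimality over \emph{all} partitions (including ones with mixed clusters whose centroids sit near no true model, where your ``at least $D/2 - O(\epsilon_0)$'' lower bound fails) does not follow from local swap optimality for a $k$-means-type objective.

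Second, and more fundamentally, conditioning on exact recovery cannot produce the claimed rates. Your clean event requires all $H$ single-trajectory empiricals to be $D/4$-close to their true transitions, and the union bound you invoke gives a failure probability of order $H\,S A\, e^{S-\alpha T D^2/32}$ --- \emph{growing} linearly in $H$ --- whereas the lemma's $\delta_1(H) \in O(KSA e^{S-\alpha T \beta H D^2})$ must \emph{decrease} in $H$ (this is what makes the third regret summand in the Corollary vanish). Since $T$ is bounded, a constant fraction $\delta = SA e^{S-\alpha T D^2/2}$ of trajectories are unavoidable outliers, so exact recovery simply does not hold with the required probability. The paper's proof is built around this: it never certifies the partition, but only uses that the returned partition's score is no larger than the true partition's score, chains triangle inequalities to bound the \emph{average} distance $\frac{1}{H}\sum_h \| P_{C^*(h)} - \hat P_{C^{opt}(h)}\|$, and lets the outlier fraction contaminate each cluster --- which is exactly why the non-vanishing term $e^{S-\alpha T D^2}$ lands in $\epsilon(H)$ while only the pooled-estimate concentration (with $\alpha T \beta H$ samples per pair) enters $\delta_1(H)$. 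Your own accounting is internally inconsistent on this point: under exact recovery the cluster estimates would concentrate at rate $e^{S-\alpha T\beta H D^2}$, leaving no source for the $e^{S-\alpha T D^2}$ term you claim for $\epsilon(H)$. To repair the argument you would need to abandon exact recovery and adopt an outlier-tolerant comparison of the two partitions' scores, as the paper does.
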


\begin{proof}
	We show each Assumption holds, starting with Assumption \ref{Ass:Clustering}.
	
	For two transition functions $P_1, P_2$ of size $S\times S \times A$ denote:
	\begin{equation}
		\| P_1 - P_2 \| \triangleq \max_{s, a} \| P_1(\cdot | s, a) - P_2(\cdot | s, a) \|	.	
	\end{equation}
	We denote by $\widehat{\Pr}_h, \widehat{\Pr}_c$ the estimated transition matrices from trajectory $h$ and cluster $c$ correspondingly. In addition, $C^*$ is the true clustering of each trajectory, and $C^{opt}$ is the clustering found by the algorithm. 
	
	Since there are at least $\alpha T$ samples from each state-action pair, according to Lemma \ref{Lem:Weissman} and the union bound, we obtain that:
	\begin{equation}
	\Pr(\| \widehat{\Pr}_h (\cdot|s, a) - {\Pr}_{C^*(h)} (\cdot | s, a) \| \leq \epsilon ) \geq 1 - S A  e^{S- \alpha T \epsilon^2 / 2}.
	\end{equation}	
	Since there are at least $\beta H$ trajectories from each model, we also obtain that:
	\begin{equation}
	\Pr(\| \widehat{\Pr}_{C^*(h)} (\cdot|s, a) - {\Pr}_{C^*(h)} (\cdot | s, a) \| \leq \epsilon ) \geq 1 - S A  e^{S- \alpha T \beta H \epsilon^2 / 2},
	\end{equation}
	and therefore:
	\begin{equation}
	\Pr(\sum_{h=1}^H \| \widehat{\Pr}_{C^*(h)} (\cdot|s, a) - {\Pr}_{C^*(h)} (\cdot | s, a) \| \leq H\epsilon ) \geq 1 - K S A  e^{S- \alpha T \beta H \epsilon^2 / 2},
	\end{equation}
	
	Now we obtain the following:
	\begin{equation}
	\begin{split}
		\sum_{h=1}^H \| P_{C^* (h)} - \hat{P}_{C^{opt} (h)} \| 
		& \leq \sum_{h=1}^H \|  P_h - P_{C^* (h)}  \| + \sum_{h=1}^H \|  P_h - \hat{P}_{C^{opt} (h)}  \|, \quad \textit{Triangle inequality}
		\\
		& \leq \sum_{h=1}^H \|  P_h - P_{C^* (h)}  \| + \sum_{h=1}^H \|  P_h - \hat{P}_{C^* (h)}  \|, \quad \textit{By Algorithm definition}
		\\
		& \leq 2\sum_{h=1}^H \|  P_h - P_{C^* (h)}  \| + \sum_{h=1}^H \|  P_{C^*(h)} - \hat{P}_{C^* (h)}  \|, \quad \textit{Triangle inequality (second term)}.
	\end{split}
	\end{equation}
	When $H$ is large, we can approximate $2\sum_{h=1}^H \|  P_h - P_{C^* (h)} \| \in O( H(1-\delta) \epsilon + H \delta ) = O( H\epsilon + H\delta )$ for $\delta= S A  e^{S- \alpha T \epsilon^2 / 2}$ since each summand is bounded by $\epsilon$ with that probability, and when it is unbounded the maximal value of $L_1$ distance between two distributions is a constant $2$. Therefore:
	\begin{equation}
		\frac{1}{H}\sum_{h=1}^H \| P_{C^* (h)} - \hat{P}_{C^{opt} (h)} \| \in O( \epsilon + \delta )
	\end{equation}
	with probability at least $1 - K S A  e^{S- \alpha T \beta H \epsilon^2 / 2}$, for $\delta= S A  e^{S- \alpha T \epsilon^2 / 2}$.
	
	Since the average is of that order, there must exist a matching between the true clusters and optimal clusters satisfying:
	\begin{equation}
	 \max_{c \in C*} \| P_c - \hat{P}_{c^{opt} } \| \in O( \epsilon + \delta )
	\end{equation}
	If the distance between every two true clusters is $D > O(\epsilon + \delta )$, the agreement between matching clusters are on all trajectories in a reasonable radius, i.e. $O(1-\delta)$ of the trajectories. so the error in each model is of the order $O(K\delta)$:
	\begin{equation}
		 \| {\Pr}_c (\cdot | s, a) - \widehat{\Pr}_i (\cdot | s, a)   \| \leq K S A  e^{S- \alpha T \epsilon^2 / 2}.
	\end{equation}
	Now in order for $D > O(\epsilon + \delta )$ to hold, we can choose $\epsilon$ to be of order $D$, and then:
	\begin{equation}
		 K S A  e^{S- \alpha T \epsilon^2 / 2} \in O(D) \Rightarrow T \in O(\frac{S}{\alpha D^2} \log (\frac{D}{K S A})).
	\end{equation}	
	To summarize, for $T \in O(\frac{S}{\alpha D^2} \log (\frac{D}{K S A}))$ we obtain that with probability at least $1 - \delta(H)$, $\| {\Pr}_c (\cdot | s, a) - \widehat{\Pr}_i (\cdot | s, a)   \| \leq \epsilon(H)$, where:
	\begin{equation}
	\epsilon(H) \in O(K S A  e^{S- \alpha T D^2}), \quad\quad \delta(H) \in O( K S A  e^{S- \alpha T \beta H D^2}).
	\end{equation}
	
	Next, we show Assumption \ref{Ass:ExplorationClassification} holds. We bound the probability of misclassification by the following probability:
	\begin{equation}
		\Pr(\| \widehat{P}_h - \widehat{P}_{C(h)} \| \leq \frac{D}{2}, \| \widehat{P}_h - \widehat{P}_{c \neq C(h)} \| \geq \frac{D}{2}),
	\end{equation}
	as if this event occurs then the true model will be chosen. To bound this quantity, we use the union bound over the complement event, so we need to bound:
	\begin{equation}
	\Pr(\| \widehat{P}_h - {\widehat{P}}_{C(h)} \| \geq \frac{D}{2}), \quad\quad \Pr( \| \widehat{P}_h - {\widehat{P}}_{c \neq C(h)} \| \leq \frac{D}{2}).
	\end{equation}
	For the left term:
	\begin{equation}
	\begin{split}
	\Pr(\| \widehat{P}_h - {\widehat{P}}_{C(h)} \| \geq \frac{D}{2}) & \leq  \Pr(\| \widehat{P}_h - P_{C(h)} \|  + \| P_{C(h)} - {\widehat{P}}_{C(h)} \| \geq \frac{D}{2})
	\\
	& \leq  \Pr(\| \widehat{P}_h - P_{C(h)} \|  \geq \frac{D}{2} - \epsilon)
	\\
	& \leq  e^{S-T_{EC} (\frac{D}{2} - \epsilon)^2 / 2}, \quad\quad \textit{Lemma \ref{Lem:Weissman}} 
	\end{split}
	\end{equation}
	For the right term:
	\begin{equation}
	\begin{split}
	\Pr( \| \widehat{P}_h - {\widehat{P}}_{c \neq C(h)} \| \leq \frac{D}{2}) & \leq  \Pr(\| \widehat{P}_h - P_{C(h)} \|  - \| P_{c \neq C(h)} - P_{C(h)} \| - \| P_{c \neq C(h)} - {\widehat{P}}_{c \neq C(h)} \| \leq \frac{D}{2})
	\\
	& \leq  \Pr(\| \widehat{P}_h - P_{C(h)} \|  \leq \frac{D}{2} + \epsilon + D)
	\\
	& \leq  e^{S-T_{EC} (\frac{3D}{2} + \epsilon)^2 / 2}, \quad\quad \textit{Lemma \ref{Lem:Weissman}} 
	\end{split}
	\end{equation}	
	
	Now, using the union bound we obtain that the classification is correct with probability at least $1-\delta$, where
	\begin{equation}
	\delta = e^{S-T_{EC} (\frac{D}{2} - \epsilon)^2 / 2} + K e^{S-T_{EC} (\frac{3D}{2} + \epsilon)^2 / 2}
	\end{equation}
%
\end{proof}

\end{document}